\documentclass{article}

\PassOptionsToPackage{numbers, compress}{natbib}

\usepackage[preprint]{neurips_2026}

\usepackage[utf8]{inputenc} 
\usepackage[T1]{fontenc}    
\usepackage{hyperref}       
\usepackage{url}            
\usepackage{booktabs}       
\usepackage{amsfonts}       
\usepackage{nicefrac}       
\usepackage{microtype}      
\usepackage{xcolor}         
\usepackage{graphicx}       
\usepackage{algorithm}
\usepackage{algorithmicx}
\usepackage{amsmath, amssymb, amsthm}
\newtheorem{proposition}{Proposition}
\newtheorem{lemma}{Lemma}
\newtheorem{theorem}{Theorem}
\newtheorem{corollary}{Corollary}
\usepackage{algpseudocode}
\usepackage{kotex}
\usepackage{cleveref}
\usepackage{subcaption}
\usepackage{wrapfig}
\usepackage{multirow}
\usepackage{colortbl}
\usepackage{comment}
\usepackage{svg}
\usepackage{pifont}
\svgsetup{inkscapelatex=false}

\title{AdaHOP: Fast and Accurate Low-Precision Training \\
via Outlier-Pattern-Aware Rotation}

\author{%
  Seonggon Kim$^{1,2}$\thanks{Work done during internship at Advanced Micro Devices, Inc.} \quad Alireza Khodamoradi$^{1}$ \quad Pranathi Vasireddy$^{1}$ \\[0.5em] \quad \textbf{Kristof Denolf}$^{1}$ \quad \textbf{Eunhyeok Park}$^{2}$
  \\[1em]
  $^{1}$Advanced Micro Devices, Inc. \quad $^{2}$POSTECH \\ [1em]
  \texttt{\{sungonuni, eh.park\}@postech.ac.kr} \\ \texttt{\{alireza.khodamoradi, Pranathi.Vasireddy, kristof.denolf\}@amd.com}
}

\begin{document}

\maketitle

\begin{abstract}
Hadamard transforms have become a key tool for stabilizing low-precision training, but existing methods apply them uniformly across tensors and computation paths. We show that this one-size-fits-all strategy is inherently limited: Hadamard transformation reduces quantization error only when its direction is properly aligned with the operand’s outlier structure. Through a systematic study of weights, activations, and gradients in LLM training, we identify three stable outlier patterns, Row-wise, Column-wise, and None, and show that each outlier pattern pair in matrix multiplication requires a distinct transform or outlier-handling strategy. We propose AdaHOP, \textbf{Ada}ptive \textbf{H}adamard transform with \textbf{O}utlier-\textbf{P}attern-aware strategy, which applies Inner Hadamard Transform~(IHT) when inner-dimension mixing properly suppresses the operands’ outliers, and selectively applies Outlier Extraction~(OE) that extracts dominant outlier rows or columns into a high-precision path when it does not. With fused, hardware-aware Triton kernels, AdaHOP enables training from scratch at MXFP4 with BF16-level quality, while achieving up to 3.6$\times$ reduced memory usage and a 1.46$\times$ end-to-end training speedup over BF16.

\end{abstract}

\section{Introduction}
\label{sec:introduction}
Model optimization has been widely studied for efficient LLM inference, but it has been adopted more cautiously for training. Since training was traditionally viewed as a one-time cost, the potential quality risks introduced by low-precision arithmetic were often harder to justify. This assumption is becoming less tenable as LLM training costs continue to grow, model lifecycles shorten, and new models must be trained more frequently. Reducing training cost is therefore an important challenge.

Low-Precision Training (LPT) addresses this challenge by reducing precision from BF16 to hardware-acceleratable formats such as FP8 or MXFP4~\cite{chen2025int}, lowering memory consumption and improving throughput. However, stable LPT remains difficult because outliers can dominate the quantization range, amplify rounding errors, and degrade training quality. Recent methods mitigate this issue with Hadamard transforms, which redistribute outlier energy before quantization~\cite{QuaRot, SpinQuant, HALO}. Yet, these methods typically apply a fixed transform uniformly across tensors or computation paths, implicitly assuming that a single smoothing direction is broadly effective.

We show that this assumption often breaks down: Hadamard smoothing is effective only when its smoothing direction aligns with the outlier structure of the operands. Through a systematic analysis of weights, activations, and gradients, we identify three recurring spatial outlier patterns: \textbf{Row-wise}, \textbf{Column-wise}, and \textbf{None}. These patterns vary across tensor types and computation paths, but remain stable during training, enabling calibration-based strategy selection.

Based on this insight, we propose AdaHOP, an adaptive outlier-pattern-aware framework for MXFP4 training. AdaHOP uses Inner Hadamard Transform (IHT) as an efficient default when inner-dimension smoothing is aligned with the outlier patterns. When IHT is insufficient, AdaHOP applies selective Outlier Extraction (OE), routing only dominant outlier rows or columns to a small high-precision path while keeping the residual computation in MXFP4. Thus, AdaHOP turns Hadamard-based LPT from a fixed transform rule into a pattern-aware strategy selection problem.

AdaHOP is implemented with fused Triton kernels on AMD CDNA4\footnote{The CDNA4 architecture is utilized on AMD Instinct\textsuperscript{TM} 
MI350 Series GPUs.} architectures. By executing the small BF16 outlier path alongside the dominant MXFP4 residual path, AdaHOP obtains the accuracy benefit of OE with low marginal cost. It achieves BF16-level training quality at MXFP4 precision, with up to 3.6$\times$ memory savings, a 2.1$\times$ kernel-level and a 1.46$\times$ end-to-end speedup over BF16.

Our contributions are summarized as follows:

\textbf{Outlier-pattern analysis.}
We present a systematic analysis of outlier structures in LLM training tensors and identify stable spatial patterns across weights, activations, and gradients.

\textbf{Pattern-aware LPT algorithm.}
We show that Hadamard effectiveness depends on outlier pattern pairs and propose AdaHOP, which uses selective optimization depending on outlier patterns.

\textbf{Hardware-efficient implementation.}
We implement AdaHOP with fused Triton kernels on AMD CDNA4, exploiting mixed-precision execution to achieve BF16-level quality with MXFP4 efficiency.

\begin{figure*}[t]
    \centering
    \includesvg[width=\textwidth]{figs/loss_analysis_2x2.svg}
    \caption{Training loss curves and loss difference relative to BF16 for (Top) Llama3.2-1B and (Bottom) Instella-3B. AdaHOP consistently achieves the lowest loss gap relative to BF16 among all MXFP4-based methods.}
    \vspace{-8pt}
    \label{fig:loss_analysis}
\end{figure*}

\section{Related Work}
\label{sec:related_work}

\subsection{Quantization and Low-Precision Training}

Reducing numerical precision is a central approach to improving LLM training efficiency, with formats ranging from FP8~\cite{FP8-LM, COAT, Jetfire, micikevicius2022fp8} to sub-8-bit formats such as INT4~\cite{NITI, Accurate, LUQ, schiemer2023hadamard} and MXFP4~\cite{Albert, BRIDGING, FP4-All-The-Way, Quartet}. These formats reduce memory footprint and improve throughput, but lower bit-widths also make training more sensitive to quantization error. A major source of this error is outliers: a few extreme values can dominate the quantization range and amplify rounding errors for non-outlier values, leading to loss spikes and degraded convergence~\cite{LLM.int8(), Smoothquant}.

Several techniques improve the stability of low-precision arithmetic, including mixed-precision training~\cite{micikevicius2018mixed}, loss scaling~\cite{micikevicius2018mixed}, and stochastic rounding~\cite{gupta2015deep, LUQ}. However, these methods do not directly address the structural outliers that distort the tensor dynamic range. Outlier-aware scaling methods, such as channel-wise or group-wise scaling~\cite{Smoothquant, AWQ}, tackle this issue more directly by adapting the quantization range to local tensor statistics. Recent rotation-based methods, which we discuss next, offer a complementary approach by suppressing outliers through energy redistribution.

\subsection{Outlier Suppression via Rotation}
\label{sec:outlier_suppression_via_rotation}

Outliers in LLM tensors, especially activations, are a well-known obstacle to accurate quantization~\cite{Smoothquant, LLM.int8(), AWQ}. For inference, methods such as QuaRot, SpinQuant, QuIP, QuIP\#, and FlatQuant~\cite{QuaRot, SpinQuant, QUIP, QUIPsharp, Flatquant} apply Hadamard or learned rotations to spread outlier energy across channels. Extending this idea to training is more difficult: in addition to weights and activations, gradients introduce a third tensor type, and quantization must remain stable across the forward and backward computation paths.

Recent LPT methods incorporate Hadamard transforms into training. HOT~\cite{HOT} uses Hadamard quantization selectively on gradient paths through path-level heuristics. Tseng et al.~\cite{Albert} combine random Hadamard transforms with stochastic rounding to obtain unbiased MXFP4 gradient estimates. Quartet~\cite{Quartet} extends this direction to native MXFP4 training by using fixed Hadamard transforms in the forward pass and randomized Hadamard transforms in
the backward pass. HALO~\cite{HALO} further observes that gradients and activations often exhibit row-wise and column-wise outliers, respectively, and places Hadamard rotations accordingly.

Despite these advances, existing methods still rely on fixed transform assignments at the tensor or path level. They do not account for the fact that outlier structures vary across operand pairs within different matrix multiplications. AdaHOP differs from these methods by moving the unit of decision from the tensor/path level to the operand-pair level.
\section{Preliminary: Low-Precision Training and Inner Hadamard Transformation}
\label{sec:preliminary}

Matrix multiplication dominates the computation of LLM training and is therefore a major target for optimization. For a linear layer, the forward and backward matrix multiplications are
\begin{align}
    \text{Forward:} \quad & Y = X W^\top, \label{eq:forward} \\
    \text{Backward ($\nabla W$):} \quad & G_W = G_Y^\top X, \label{eq:backward_gw} \\
    \text{Backward ($\nabla X$):} \quad & G_X = G_Y W, \label{eq:backward_gx}
\end{align}
where $X \in \mathbb{R}^{b \times d_{\text{in}}}$ is the input activation, $W \in \mathbb{R}^{d_{\text{out}} \times d_{\text{in}}}$ is the weight matrix, $b$ is the number of tokens, $d_{\text{in}}$ and $d_{\text{out}}$ are the input and output dimensions, and $G_Y \in \mathbb{R}^{b \times d_{\text{out}}}$ denotes the gradient of the loss with respect to $Y$.

LPT improves efficiency primarily by quantizing matrix-multiplication operands. For a generic product $C = AB$, where $A \in \mathbb{R}^{m \times k}$ and $B \in \mathbb{R}^{k \times n}$, LPT computes
\begin{equation}
    C \approx Q(A)Q(B),
\end{equation}
where $Q(\cdot)$ denotes the quantization operator. This introduces perturbations that can accumulate and degrade convergence. A major error source is outliers: a few large values dominate the quantization range and amplify rounding errors. Suppressing outliers before quantization is therefore crucial.

A widely used outlier-suppression technique is the Inner Hadamard Transform (IHT), which applies a Hadamard rotation along the shared dimension of a matrix multiplication. Let $H_k \in \mathbb{R}^{k \times k}$ be a normalized Walsh--Hadamard matrix satisfying $H_k^\top H_k = I_k$. IHT computes
\begin{equation}
    C_{\text{IHT}} = Q(AH_k) \cdot Q(H_k^\top B).
    \label{eq:inner_hadamard}
\end{equation}
In infinite precision, this transformation preserves the product exactly since $(AH_k)(H_k^\top B) = AB$. Under quantization, its benefit comes from mixing values along the shared dimension before quantization: $AH_k$ smooths the columns of $A$, while $H_k^\top B$ smooths the rows of $B$, which can reduce quantization error when the smoothing direction is aligned with the operands' outlier structures~\cite{HOT, yang2023efficient}.

A complementary rotation is the Outer Hadamard Transform (OHT):
\begin{equation}
    C_{\mathrm{OHT}}
    =
    H_m^\top \bigl(Q(H_m A) \cdot Q(BH_n)\bigr) H_n^\top .
    \label{eq:outer_hadamard}
\end{equation}
OHT smooths the rows of $A$ and columns of $B$, but requires outer-dimension transforms and inverse rotations after multiplication, introducing extra data movement. OHT can be useful for compatible patterns but is less efficient as a default strategy.
The key question is therefore not whether to use Hadamard transforms, but which dimension should be smoothed for each operand pair. We answer this question by analyzing outlier patterns in training.

\section{Analysis of Outlier Patterns}
\label{sec:analysis}

We now examine the outlier structures underlying the alignment issue discussed above. Rather than treating outliers as isolated large values, we analyze how they are spatially organized within LLM training tensors. This perspective allows us to reason about why the same Hadamard transform can reduce quantization error in some computation paths but fail in others.

\begin{figure*}[t]
    \centering
    \includesvg[width=\textwidth]{figs/3d_plot_samples.svg}
    \caption{(Left) 3D visualization of weight, activation, and gradient tensors from Llama3.2-1B's 
    \texttt{block.2.self\_attn\_o}. (Right) A column-wise outlier tensor after different Hadamard directions. Right Hadamard suppresses column-wise outliers; Left Hadamard leaves the structure intact.}
    \vspace{-8pt}
    \label{fig:3d_plot}
\end{figure*}

\subsection{Definition of Outlier Patterns}
\label{sec:outlier_pattern_def}
We classify outlier structures by the dimension along which large magnitudes are concentrated. Specifically, we define three patterns, detected using the normalized Coefficient of Variation (CV) metric described in \Cref{sec:appendix_calibration}:

\begin{itemize}
    \item \textbf{Row-wise (R)}: Outliers are concentrated in a small number of rows; i.e., certain rows have significantly larger magnitudes than others.
    \item \textbf{Column-wise (C)}: Outliers are concentrated in a small number of columns; i.e., certain columns have significantly larger magnitudes than
    others.
    \item \textbf{None (N)}: Magnitudes are relatively uniform across rows and columns, with no pronounced row- or column-wise concentration.
\end{itemize}

\Cref{fig:3d_plot} (Left) shows an early block (\texttt{block.2.self\_attn\_o}) of Llama3.2-1B after 100 training steps on C4~\cite{C4}. The weight tensor is classified as None, showing no pronounced directional concentration. The activation tensor exhibits column-wise outliers, where a small subset of input channel dimensions remains consistently large across token positions. In contrast, the gradient tensor exhibits row-wise outliers, where a small number of token positions carry consistently large magnitudes across output channel dimensions. However, these examples are illustrative rather than exhaustive: a tensor type is not restricted to a single pattern. In particular, gradient tensors may exhibit Column-wise or Row-wise depending on the projection type (\Cref{fig:op_per_step}). The key point is that the effect of a Hadamard transform depends not only on the presence of outliers, but also on the dimension along which they are concentrated. The next subsection makes this dependence explicit.

\subsection{Why Outlier Patterns Matter}
\label{sec:why_patterns_matter}

The effectiveness of a Hadamard transform is determined by how its smoothing direction aligns with the outlier structure. \Cref{fig:3d_plot} (Right) illustrates this effect for a tensor with column-wise outliers. A right-sided Hadamard transform, $XH$, mixes values across columns and suppresses the column-wise concentration, whereas a left-sided transform, $HX$, mixes across rows and leaves the outlier structure largely intact. Thus, outlier suppression requires smoothing along the outlier-concentrated dimension.

This directional effect directly extends to matrix multiplication. In \Cref{eq:inner_hadamard}, IHT smooths the columns of $A$ and the rows of $B$. It is highly effective for the \textbf{CR} pair, where both operands are smoothed along their outlier dimensions. In contrast, for the \textbf{RC} pair, both smoothing directions are misaligned with the outlier structures, making IHT highly ineffective.

This directional dependence explains why a single fixed transform cannot be optimal for all computation paths. Existing methods still use fixed assignments: Tseng et al.~\cite{Albert} applies IHT uniformly, while HALO~\cite{HALO} applies OHT uniformly across $\nabla X$ paths. However, IHT and OHT are each effective only for compatible outlier patterns; for incompatible pairs, they may leave dominant outliers unsmoothed or even increase quantization error. We therefore analyze this at the level of outlier pattern pairs.

We empirically validate this alignment effect in \Cref{fig:mse_analysis}, which measures the reduction in matrix-multiplication error after applying IHT to each outlier pattern pair. Across both synthetic tensors and real LLM tensors, IHT improves only the pairs whose outlier structures are compatible with its smoothing directions. The largest gain appears for \textbf{CR}, while \textbf{CN} and \textbf{CC} receive partial benefits; for the remaining pairs, IHT is largely ineffective or even harmful.

Thus, IHT is not universally beneficial. Its effectiveness is governed by the alignment between Hadamard smoothing directions and operand outlier structures, which explains why a uniform IHT strategy fails to minimize quantization error across all computation paths.

\subsection{Outlier Patterns in LLM Layers}
\label{sec:patterns_in_layers}

We next examine which pattern pairs actually arise during LLM training. Using the CV-based detection method, we measure pattern-pair frequencies across Llama3.2-1B, Instella-3B, and Llama3.1-8B after 30 training steps.
\Cref{tab:pattern_distribution} summarizes the results.

\begin{table*}[t]
\centering
\caption{Distribution of outlier pattern pairs across computation paths for three LLM architectures after 30 training steps on C4. Patterns RR, NR, and CR do not appear because weights consistently exhibit None, and activations exhibit Column-wise or None.}
\vspace{-6pt}
\label{tab:pattern_distribution}
\small
\begin{tabular}{l rrrr rrrr rrrr}
\toprule
 & \multicolumn{4}{c}{\textbf{Llama3.2-1B}} & \multicolumn{4}{c}{\textbf{Instella-3B}} & \multicolumn{4}{c}{\textbf{Llama3.1-8B}} \\
\cmidrule(lr){2-5} \cmidrule(lr){6-9} \cmidrule(lr){10-13}
\textbf{Pattern Pair} & Total & Fwd & $\nabla W$ & $\nabla X$ & Total & Fwd & $\nabla W$ & $\nabla X$ & Total & Fwd & $\nabla W$ & $\nabla X$ \\
\midrule
RN & 23 & 0 & 15 & 8 & 79 & 0 & 33 & 46 & 65 & 0 & 27 & 38 \\
RC & 69 & 0 & 69 & 0 & 111 & 0 & 111 & 0 & 92 & 0 & 92 & 0 \\
NN & 35 & 15 & 0 & 20 & 100 & 36 & 2 & 62 & 103 & 32 & 4 & 67 \\
NC & 20 & 0 & 20 & 0 & 60 & 0 & 60 & 0 & 63 & 0 & 63 & 0 \\
CN & 181 & 97 & 0 & 84 & 361 & 216 & 1 & 144 & 312 & 192 & 1 & 119 \\
CC & 8 & 0 & 8 & 0 & 45 & 0 & 45 & 0 & 37 & 0 & 37 & 0 \\
\bottomrule
\end{tabular}
\end{table*}

\begin{figure*}[t]
    \centering
    \includesvg[width=\textwidth]{figs/mse_analysis.svg}
    \vspace{-10pt}
    \caption{IHT is effective only when its smoothing direction matches the outlier structure.
We report the relative MSE reduction from IHT over all nine outlier-pattern pairs.
Results are shown for synthetic tensors with five random seeds (Left)
and real tensors from Llama3.2-3B's \texttt{block.11.feed\_forward.w2} (Right).
Improvement is defined as
$(E_{\mathrm{base}} - E_{\mathrm{IHT}})/E_{\mathrm{base}} \times 100$
; see \Cref{sec:appendix_setup} for the full setup and error definitions.
}
    \vspace{-10pt}
    \label{fig:mse_analysis}
\end{figure*}

Several consistent trends emerge. First, not all theoretical pattern pairs appear in practice: RR, NR, and CR are absent because weights are consistently classified as None, while activations are classified as either Column-wise or None. Second, the pattern distribution is strongly path-dependent. In the forward path, $Y = XW^\top$, only \textbf{CN} and \textbf{NN} pairs arise, both of which are compatible with IHT. In contrast, the $\nabla W$ path, $G_W = G_Y^\top X$, is dominated by \textbf{RC, RN, NC} and \textbf{CC} pairs, for which a single fixed transform is insufficient. The $\nabla X$ path, $G_X = G_YW$, falls between these cases, containing a mixture of \textbf{RN, NN,} and \textbf{CN} pairs. These observations show that the challenge is not merely the presence of outliers, but the fact that their pattern pairs differ systematically across forward and backward computation paths. Consequently, the appropriate low-precision treatment cannot be chosen globally; it must depend on the operand patterns of each matrix multiplication.

Finally, we examine whether these patterns are stable enough to be used for such a decision. Tracking Llama3.2-3B over 300 training steps shows that pattern assignments stabilize early and remain consistent throughout training (\Cref{fig:op_per_step}; additional layers in \Cref{sec:OP_figure}). This temporal stability enables a practical calibration-based design: \textit{patterns can be detected once in a short initial phase and then reused to guide strategy selection during training.} The next section builds on this observation to introduce AdaHOP.
\section{Methodology}
\label{sec:method}
Based on the analysis above, we propose AdaHOP, an adaptive outlier-pattern-aware framework that assigns each matrix multiplication a suitable low-precision strategy based on its operand pattern pair, as illustrated in \Cref{fig:pipeline}. AdaHOP detects tensor-wise outlier patterns through a short BF16 calibration phase, then uses the detected pattern pairs to select the computation strategy for each matrix multiplication. We describe the calibration and strategy selection below, followed by the theoretical justification and kernel implementation.

\begin{figure*}[t]
    \centering
    \includesvg[width=\textwidth]{figs/OP_per_step.svg}
    \caption{Outlier patterns across 300 training steps for Llama3.2-3B. Patterns remain stable throughout training, enabling one-time calibration. Other layers in \Cref{sec:OP_figure}.}
    \vspace{-8pt}
    \label{fig:op_per_step}
\end{figure*}

\begin{figure*}[t]
    \centering
    \includegraphics[width=\textwidth]{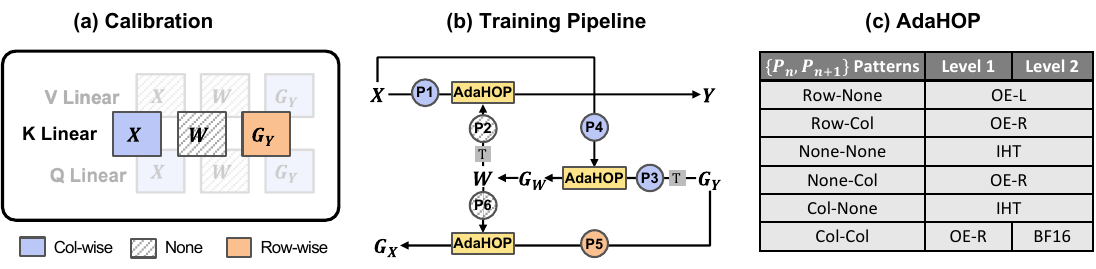}
    \caption{AdaHOP pipeline. For each linear layer's three matrix multiplications, AdaHOP selectively applies IHT, OE (Left or Right), or BF16 based on the detected outlier pattern pair $P_n$ ($n=1,3,5$).}
    \vspace{-10pt}
    \label{fig:pipeline}
\end{figure*}

\subsection{Calibration Training for Outlier Pattern Detection}
\label{sec:calibration}

Exploiting the temporal stability observed in \Cref{sec:patterns_in_layers}, AdaHOP performs a 30-step BF16 calibration at the start of training, computing the dimension-normalized CV of each tensor along both dimensions. Each tensor is classified as Row-wise, Column-wise, or None using threshold $\tau = 2.0$ (\Cref{sec:appendix_calibration}), with the final pattern determined by majority vote. This one-time calibration incurs negligible overhead (<0.01\% of the training budget) and eliminates runtime pattern detection.

\subsection{Adaptive Hadamard Strategy Selection}
\label{sec:adaptive_strategy}

Given the calibrated outlier pattern pair, AdaHOP selects a computation strategy for each matrix multiplication. \Cref{fig:pipeline} summarizes the complete strategy assignment. When IHT already smooths the dominant outlier directions, AdaHOP applies IHT alone. When IHT is misaligned with the outlier structure, AdaHOP combines IHT with Outlier Extraction (OE), which separates the dominant outlier rows or columns into a small BF16 path while applying MXFP4 IHT to the residual tensor. 
This decomposition is possible because the outliers identified in \Cref{sec:outlier_pattern_def} are structured: Row-wise outliers occupy entire rows, and Column-wise outliers occupy entire columns. Thus, the corresponding sub-matrix can be cleanly isolated and multiplied in BF16, while the remaining residual matrix is smoothed and quantized with IHT.

For pattern pairs where inner-dimension smoothing is effective, namely
\textbf{CN} and \textbf{NN} (and \textbf{CR}, although it does not appear in practice), AdaHOP uses IHT alone. For pairs where IHT is ineffective, AdaHOP extracts the dominant outlier component before applying IHT to the residual. For \textbf{RN}, the outliers lie in the rows of $A$, so AdaHOP applies \textbf{OE-Left}. We decompose $A = A_{\mathrm{res}} + A_{\mathrm{out}}$, where $A_{\mathrm{out}}$ contains the top-$r$ outlier rows ($r = 64$; see \Cref{sec:appendix_block_size}), and compute
\begin{equation}
    C =
    \underbrace{Q(A_{\mathrm{res}} H_k) \cdot Q(H_k^\top B)}_{\mathrm{MXFP4}}
    +
    \underbrace{A_{\mathrm{out}} \cdot B}_{\mathrm{BF16}} .
    \label{eq:oe_left}
\end{equation}
For \textbf{RC}, \textbf{NC}, and \textbf{CC}, AdaHOP applies \textbf{OE-Right}: it decomposes $B = B_{\mathrm{res}} + B_{\mathrm{out}}$, where $B_{\mathrm{out}}$
contains the top-$r$ outlier columns, and computes
\begin{equation}
    C =
    \underbrace{Q(A H_k) \cdot Q(H_k^\top B_{\mathrm{res}})}_{\mathrm{MXFP4}}
    +
    \underbrace{A \cdot B_{\mathrm{out}}}_{\mathrm{BF16}} .
    \label{eq:oe_right}
\end{equation}

This right-side extraction is particularly important for the \textbf{RC}, \textbf{NC}, and \textbf{CC} pairs, which predominantly arise in the $\nabla W$ path with $A = G_Y^\top$ and $B = X$. In these cases, extracting the activation-side column outliers yields the largest error reduction under a fixed extraction budget, since activations exhibit higher kurtosis than gradients (\Cref{sec:appendix_setup}).

The \textbf{CC} pair is rare, accounting for less than 6\% of all paths, but it appears in attention-critical Key/Value projections~\cite{kivi, gear, xkv}. AdaHOP therefore provides two quality--efficiency levels. \textbf{AdaHOP-Lv1} uses OE-Right with IHT on the residual, preserving most computation in MXFP4. \textbf{AdaHOP-Lv2} computes CC pairs fully in BF16, prioritizing training quality for these sensitive paths.

\subsection{Accuracy--Efficiency Design of AdaHOP}
\label{sec:accuracy_efficiency_design}

AdaHOP preserves IHT as the default low-cost primitive while correcting its pattern-dependent failure cases. However, for misaligned pairs such as \textbf{RC}, IHT leaves the dominant outlier unsmoothed. OHT can recover the complementary smoothing and improves the error bound  by $O(\sqrt{mn})$ (\Cref{thm:rowcol_error_appendix}), but requires additional transforms and inverse rotations, incurring substantial data movement.

AdaHOP instead uses OE as a targeted correction. Since dominant outliers appear as rows or columns, OE isolates only the corresponding outlier submatrix into a BF16 path and applies MXFP4 IHT to the residual. This removes the dominant outlier amplification term from the error bound. For row-wise outliers in $A$, \Cref{thm:outlier_extract_appendix} reduces the bound from
\[
O\!\left(
\epsilon_{\mathrm{quant}} \|A\|_F \|B\|_F
\sqrt{\gamma(A)\gamma(B)}
\right)
\]
to
\[
O\!\left(
\epsilon_{\mathrm{quant}} \|A_{\mathrm{res}}\|_F \|B\|_F
\sqrt{\gamma(H_k^\top B)}
\right),
\]
where $A_{\mathrm{res}}$ is the residual after extracting the dominant outlier rows. Thus, AdaHOP achieves a pattern-aware accuracy--efficiency trade-off: most arithmetic remains in the MXFP4 path, while only a small sensitive component is computed in BF16. Complete proofs are provided in \Cref{sec:appendix_theory}.

\subsection{Hardware-Aware Implementation}
\label{sec:hardware_implementation}

AdaHOP makes OE practical by aligning the algorithm with AMD CDNA4's mixed-precision execution capability. Because OE extracts only a small structured outlier submatrix, the BF16 outlier path can be executed together with the dominant MXFP4 residual path, rather than as a separate full-size computation. Thus, the numerical benefit of OE is obtained with only a small marginal cost. We implement this design as fused Triton kernels optimized for AMD CDNA4. The pipeline consists of Fast Outlier Index Detection (FOID), Fused IHT + MXFP4 quantization, mixed-precision GEMM, and scatter-add kernel, minimizing intermediate tensors and data movement.
We report the detailed implementation and its latency breakdown in \Cref{sec:appendix_hardware} and \Cref{tab:latency}, respectively.

\begin{table*}[t]
\centering
\caption{Zero-shot accuracy (\%) after training on C4. Best MXFP4-based result is in \textbf{bold}; second best is \underline{underlined}.}
\label{tab:downstream}
\small
\setlength{\tabcolsep}{3.2pt}
\renewcommand{\arraystretch}{0.92}
\begin{tabular*}{\textwidth}{@{\extracolsep{\fill}}l c l c c c c c@{}}
\toprule
\textbf{Model} & \textbf{Tokens} & \textbf{Method} &
\textbf{PIQA} & \textbf{HellaSwag} & \textbf{ARC-E} & \textbf{Lambada} & \textbf{Avg.} \\
\midrule
\multirow{7}{*}{Llama3.2-1B} & \multirow{7}{*}{40B}
 & BF16 & 72.69 & 52.33 & 47.85 & 38.17 & 52.76 \\
 & & MXFP4 & 64.69 & 37.94 & 37.29 & 11.93 & 37.96 \\
 & & MXFP4+Had. & 71.85 & 51.37 & 45.91 & 37.68 & 51.70 \\
 & & Tseng et al. & 72.14 & 50.95 & 46.17 & 36.81 & 51.52 \\
 & & HALO & 72.09 & 51.89 & 46.72 & \textbf{39.03} & 52.43 \\
 & & AdaHOP-Lv1 & \textbf{73.04} & \underline{52.24} & \underline{46.91} & 38.21 & \underline{52.60} \\
 & & AdaHOP-Lv2 & \underline{73.01} & \textbf{52.32} & \textbf{47.05} & \underline{38.55} & \textbf{52.73} \\
\midrule
\multirow{7}{*}{Instella-3B} & \multirow{7}{*}{40B}
 & BF16 & 73.52 & 54.31 & 48.91 & 46.26 & 55.75 \\
 & & MXFP4 & 68.17 & 42.07 & 40.91 & 27.71 & 44.72 \\
 & & MXFP4+Had. & 72.69 & 53.54 & 48.61 & 43.39 & 54.56 \\
 & & Tseng et al. & \underline{73.45} & 53.19 & 48.86 & 45.12 & 55.16 \\
 & & HALO & 72.20 & \underline{53.54} & 47.81 & \textbf{45.90} & 54.86 \\
 & & AdaHOP-Lv1 & 73.23 & 53.32 & \underline{49.41} & 45.02 & \underline{55.25} \\
 & & AdaHOP-Lv2 & \textbf{73.83} & \textbf{54.26} & \textbf{50.67} & \underline{45.42} & \textbf{56.05} \\
\midrule
\multirow{5}{*}{Llama3.2-3B} & \multirow{5}{*}{60B}
 & BF16 & 73.72 & 57.75 & 48.53 & 47.53 & 56.88 \\
 & & MXFP4 & 55.88 & 26.30 & 29.25 & 6.09 & 29.38 \\
 & & HALO & \underline{73.45} & \textbf{57.34} & 47.98 & \textbf{47.95} & 56.68 \\
 & & AdaHOP-Lv1 & \textbf{73.67} & \underline{57.18} & \underline{48.44} & \underline{47.41} & \underline{56.68} \\
 & & AdaHOP-Lv2 & \textbf{73.67} & 57.14 & \textbf{49.12} & 47.28 & \textbf{56.80} \\
\midrule
\multirow{5}{*}{Qwen3-4B} & \multirow{5}{*}{80B}
 & BF16 & 75.30 & 62.77 & 54.63 & 52.05 & 61.19 \\
 & & Tseng et al. & \underline{75.30} & 61.90 & \textbf{54.67} & 50.26 & 60.53 \\
 & & HALO & 74.46 & 62.22 & 52.57 & \underline{51.68} & 60.23 \\
 & & AdaHOP-Lv1 & \textbf{75.68} & \underline{62.36} & \underline{53.75} & \textbf{52.00} & \textbf{60.95} \\
 & & AdaHOP-Lv2 & 74.68 & \textbf{63.38} & 53.70 & 50.94 & \underline{60.68} \\
\midrule
\multirow{4}{*}{Llama3.1-8B} & \multirow{4}{*}{160B}
 & BF16 & 75.79 & 64.19 & 57.11 & 49.64 & 61.68 \\
 & & HALO & 74.98 & 63.17 & 55.31 & 48.04 & 60.37 \\
 & & AdaHOP-Lv1 & \underline{75.43} & \underline{64.94} & \textbf{56.31} & \textbf{49.84} & \textbf{61.63} \\
 & & AdaHOP-Lv2 & \textbf{76.28} & \textbf{64.98} & \underline{55.89} & \underline{48.57} & \underline{61.43} \\
\bottomrule
\end{tabular*}
\vspace{-10pt}
\end{table*}
\section{Experimental Results}
\label{sec:experiments}

We implement AdaHOP on Torchtitan~\cite{torchtitan} and evaluate it on Llama3.2-1B, Llama3.2-3B, Llama3.1-8B~\cite{Llama3}, Instella-3B~\cite{Instella}, and Qwen3-4B~\cite{Qwen3}, trained on C4~\cite{C4}. We compare against BF16, naive MXFP4, MXFP4+Hadamard, Tseng et al.~\cite{Albert}, and HALO~\cite{HALO}. We report training loss, zero-shot accuracy on PIQA~\cite{PIQA}, HellaSwag~\cite{HellaSwag}, ARC-Easy~\cite{ARC}, and LAMBADA~\cite{LAMBADA}, as well as memory and throughput. Detailed setups are provided in \Cref{sec:appendix_setup}.

\subsection{Q1. Does pattern-aware selection reduce the BF16 loss gap?}
\label{sec:main_results}

\Cref{fig:loss_analysis} compares training dynamics including loss and the loss gap relative to BF16 for models trained from scratch. AdaHOP consistently yields the smallest BF16 gap among MXFP4-based methods, keeping the loss within 0.01 of BF16 on both Llama3.2-1B and Instella-3B. In contrast, naive MXFP4 exhibits severe degradation, while prior Hadamard-based methods only partially close the gap because fixed transform assignments do not account for diverse outlier patterns across computation paths. This demonstrates that AdaHOP's pattern-aware selection effectively suppresses quantization error during training, enabling MXFP4 training to closely match BF16-level convergence.

\subsection{Q2. Does the loss improvement translate to downstream accuracy?}
\label{sec:downstream}

Having established that AdaHOP closely follows BF16 during scratch training, we next evaluate whether this improvement translates to downstream performance. \Cref{tab:downstream} reports zero-shot accuracy on four benchmarks after training. Across models from Llama3.2-1B to Llama3.1-8B, AdaHOP achieves the best average accuracy among MXFP4-based methods, remaining within 0.24 points of BF16 on average. On Instella-3B, AdaHOP-Lv2 even surpasses BF16 by 0.30 points, and on Llama3.1-8B, AdaHOP-Lv1 outperforms HALO by 1.26 points, nearly closing the BF16 gap. These results indicate that AdaHOP's training-quality gains translate into preserved zero-shot capability, and that its pattern-aware strategy remains effective as model scale increases.

\begin{table*}[t]
\centering
\caption{(Left) End-to-end efficiency on Llama3.1-8B training with \texttt{seq\_len=4096} and \texttt{BS=8}. Memory (GB), Speed (Token/sec), and acceleration ($\times$) are measured for the full training pipeline. Quality: \textcolor[RGB]{0,150,0}{\checkmark} within 0.3pt of BF16, \textcolor[RGB]{230,150,0}{\ding{115}} within 1pt, \textcolor{red}{\ding{55}} otherwise, based on the average downstream score across all models. (Right) kernel latency (ms) breakdown for two representative Llama3.1-8B GEMMs.}
\label{tab:efficiency_latency}
\small
\begin{minipage}[t]{0.52\textwidth}
\centering
\label{tab:efficiency}
\setlength{\tabcolsep}{6pt}
\begin{tabular}{@{}lcccc@{}}
\toprule
\textbf{Method} & \textbf{Mem.} & \textbf{Speed} & \textbf{Accel.} & \textbf{Quality} \\
\midrule
BF16 & 76.00 & 10930 & 1.00 & \textcolor[RGB]{0,150,0}{\checkmark} \\
MXFP4 & 20.19 & 17640 & 1.61 & \textcolor{red}{\ding{55}} \\
MXFP4+Had. & 20.60 & 16305 & 1.49 & \textcolor{red}{\ding{55}} \\
Tseng et al. & 20.60 & 14881 & 1.36 & \textcolor[RGB]{230,150,0}{\ding{115}} \\
HALO & 20.60 & 14251 & 1.30 & \textcolor[RGB]{230,150,0}{\ding{115}} \\
\textbf{AdaHOP-Lv1} & \textbf{20.94} & 15804 & \textbf{1.45} & \textcolor[RGB]{0,150,0}{\checkmark} \\
\textbf{AdaHOP-Lv2} & \textbf{28.04} & 15922 & \textbf{1.46} & \textcolor[RGB]{0,150,0}{\checkmark} \\
\bottomrule
\end{tabular}
\end{minipage}%
\hfill
\begin{minipage}[t]{0.46\textwidth}
\centering
\label{tab:latency}
\setlength{\tabcolsep}{5pt}
\begin{tabular}{@{}l cc@{}}
\toprule
\textbf{Module} & (1) & (2) \\
\midrule
BF16 GEMM & 4.76 & 1.05 \\
\midrule
FOID & 0.03 & 0.04 \\
IHT and Quant & 0.45 & 0.13 \\
Fused GEMM & 1.65 & 0.64 \\
Fused Scatter-Add & 0.07 & 0.02 \\
\midrule
\textbf{AdaHOP Total} & \textbf{2.22 (2.13$\times$)} & \textbf{0.84 (1.26$\times$)} \\
\bottomrule
\end{tabular}
\end{minipage}
\vspace{-8pt}
\end{table*}

\begin{figure*}[t]
    \centering
    \includesvg[width=\textwidth]{figs/latency_per_batch.svg}
    \caption{Throughput per batch size on Llama3.1-8B with \texttt{seq\_len=4096}. Highlighted number is the speedup of AdaHOP-Lv2. As the batch size increases, acceleration is maximized.}
    \vspace{-8pt}
    \label{fig:latency_per_batch}
\end{figure*}

\subsection{Q3. Does OE preserve the efficiency benefits of MXFP4 training?}
\label{sec:efficiency}
\Cref{fig:latency_per_batch} reports end-to-end training throughput per batch size on Llama3.1-8B. As the batch size increases, AdaHOP's acceleration is maximized and achieves 1.46$\times$ at BS=8. Detailed memory, speed, and acceleration for BS=8 are reported in \Cref{tab:efficiency} (Left). AdaHOP-Lv1 achieves BF16-level quality with 3.6$\times$ memory compression and 1.45$\times$ acceleration, while AdaHOP-Lv2 provides a higher-fidelity option with 2.7$\times$ compression and 1.46$\times$ acceleration. Unlike MXFP4 or MXFP4+Hadamard, AdaHOP preserves quality while outperforming Tseng et al.\ and HALO in throughput.

\Cref{tab:latency} (Right) further isolates kernel behavior on two representative Llama3.1-8B GEMMs: (1) MLP projection (m, n, k) = 14336 $\times$ 4096 $\times$ 32768 and (2) Attention projection 4096 $\times$ 4096 $\times$ 32768. AdaHOP achieves 2.13$\times$ and 1.26$\times$ speedup over BF16 for MLP and attention projections, respectively. FOID and scatter-add are negligible, and the BF16 outlier path adds limited overhead, validating the OE+IHT design. The smaller end-to-end speedup reflects non-GEMM operations in the full pipeline. These measurements confirm that AdaHOP preserves most of the MXFP4 speed advantage while avoiding the transform overhead of global outer-dimension rotations.

\section{Conclusion}
\label{sec:conclusion}
We presented AdaHOP, a pattern-aware framework for accurate and efficient MXFP4 training. Our analysis shows that outliers in LLM training tensors form structured and stable Row-wise, Column-wise, and None patterns, and that the effectiveness of Hadamard smoothing depends on the outlier pattern pair rather than the computation path alone. Based on this insight, AdaHOP uses IHT as an efficient default and selectively applies OE when dominant structured outliers must be handled separately. With lightweight calibration and hardware-aware Triton kernels on AMD CDNA4, AdaHOP achieves BF16-level training quality at MXFP4 precision, with up to 3.6$\times$ memory compression and 2.13$\times$ kernel-level speedup.

\section{Limitations and Future Work}
\label{sec:limitations}

AdaHOP currently uses fixed Walsh-Hadamard matrices; combining AdaHOP with learned rotations~\cite{SpinQuant} could yield further improvements. The pattern analysis spans Llama-family and Instella models (1B to 8B); extending to architectures with different attention or normalization schemes (e.g., Mixtral~\cite{mixtral}, Gemma~\cite{gemma}) is a promising direction. The framework currently targets MXFP4, but the pattern-aware design is format-agnostic. Finally, further ablations on the extraction count $r$ and adaptive per-layer selection could improve the accuracy–efficiency trade-off.

\bibliographystyle{plainnat}
\bibliography{custom}


\newpage
\appendix

\section{Calibration Algorithm Details}
\label{sec:appendix_calibration}

\paragraph{Outlier Pattern Detection via CV.}
For a given tensor $T \in \mathbb{R}^{m \times n}$, we first compute the per-row and per-column variance vectors:
\begin{align}
    v^{\text{row}}_i &= \text{Var}(T_{i,:}), \quad i = 1, \ldots, m, \\
    v^{\text{col}}_j &= \text{Var}(T_{:,j}), \quad j = 1, \ldots, n.
\end{align}
We then measure how non-uniformly these variances are distributed by computing the coefficient of variation (CV) of each variance vector:
\begin{align}
    \text{CV}_{\text{row}} &= \frac{\text{std}(v^{\text{row}})}{\text{mean}(v^{\text{row}}) + \epsilon}, \\
    \text{CV}_{\text{col}} &= \frac{\text{std}(v^{\text{col}})}{\text{mean}(v^{\text{col}}) + \epsilon},
\end{align}
where $\epsilon$ is a small constant for numerical stability. A high $\text{CV}_{\text{row}}$ indicates that a few rows have much larger variance than others (row-wise outliers), while a high $\text{CV}_{\text{col}}$ indicates the same for columns.

To ensure size-independent comparison, we normalize each CV by its expected value under an IID assumption. For IID data, sample variance follows a scaled chi-squared distribution with $\text{CV}(\text{Var}) \approx \sqrt{2/(n_s - 1)}$, where $n_s$ is the number of samples per variance estimate. Since row variances are computed over $n$ columns and column variances over $m$ rows, we normalize:
\begin{align}
    \hat{\text{CV}}_{\text{row}} &= \text{CV}_{\text{row}} \;/\; \sqrt{2/(n - 1)}, \\
    \hat{\text{CV}}_{\text{col}} &= \text{CV}_{\text{col}} \;/\; \sqrt{2/(m - 1)}.
\end{align}
For IID data, $\hat{\text{CV}}_{\text{row}} \approx \hat{\text{CV}}_{\text{col}} \approx 1$, so any deviation from unity signals structured outliers. This normalization corrects for the statistical artifact where non-square matrices would exhibit spurious patterns due to different sample counts for row vs.\ column variance estimation. The outlier pattern is classified by comparing the \emph{ratio} of the two normalized CVs against a threshold $\tau$:
\begin{itemize}
    \item \textbf{Row-wise}: if $\hat{\text{CV}}_{\text{row}} / \hat{\text{CV}}_{\text{col}} > \tau$ (row variances are disproportionately non-uniform, indicating outlier rows).
    \item \textbf{Column-wise}: if $\hat{\text{CV}}_{\text{col}} / \hat{\text{CV}}_{\text{row}} > \tau$ (column variances are disproportionately non-uniform, indicating outlier columns).
    \item \textbf{None}: if neither ratio exceeds $\tau$.
\end{itemize}
In our experiments, we use $\tau = 2.0$.

\section{Why Variance Is Used for Outlier Detection}
\label{sec:appendix_why_variance}

In AdaHOP, the calibration (\Cref{sec:appendix_calibration}) and Fast Outlier Index Detection (\Cref{sec:appendix_hardware}) stages identify outlier rows or columns by ranking candidates according to their within-row or within-column \emph{variance}. This section provides empirical justification for this design choice over the more intuitive alternative of using mean magnitude.

The key observation is that outlier rows or columns in LLM training tensors do not consist entirely of uniformly large values. Instead, they exhibit a mixture of a few very large values and many ordinary-magnitude values within the same row or column. This heterogeneous composition is precisely what makes these features problematic for block-scaled quantization: the large values dominate the scaling factor, causing severe rounding error for the smaller values in the same quantization group. Because variance directly measures this within-feature magnitude dispersion, it is a natural proxy for quantization difficulty.

To validate this reasoning, we measure three per-column statistics for the activation tensor of Llama3.2-3B's \texttt{block.11.feed\_forward.w2}: (i)~mean magnitude, (ii)~variance, and (iii)~quantization error under MXFP4 quantization. \Cref{fig:mean_var_quant} visualizes these three metrics across all columns. In each subplot, the top-32 columns according to the respective metric are highlighted in red, while the remaining columns are shown in light blue. The figure reveals a strong visual correspondence: columns with high quantization error largely overlap with those exhibiting high variance, and to a slightly lesser extent, with those having high mean magnitude.

\begin{figure*}[t]
    \centering
    \includesvg[width=\textwidth]{figs/activation_mean_var_qerror.svg}
    \caption{Per-column mean magnitude, variance, and quantization error of the activation tensor from Llama3.2-3B's \texttt{block.11.feed\_forward.w2}. Red bars indicate the top-32 columns for each metric; light blue bars indicate the remaining columns.}
    \label{fig:mean_var_quant}
\end{figure*}

To quantify this correspondence, we compute the Spearman rank correlation between each detection metric and the per-column quantization error. Variance achieves a Spearman correlation of $\rho = 0.9392$ with quantization error, compared to $\rho = 0.8720$ for mean magnitude. The higher correlation confirms that variance is a more faithful predictor of which columns will incur the largest quantization error, and a more effective criterion for selecting outlier features to extract into the high-precision path.

Intuitively, a column with large but uniform values has high mean magnitude yet low variance; such a column is well-served by block-scaled quantization because a single scale factor accurately represents all values. Conversely, a column containing a mixture of extreme and moderate values has high variance and is poorly represented by any single scale, leading to large quantization error. Variance captures exactly this distinction, which mean magnitude does not.

These results provide a principled justification for using variance as the outlier detection criterion in FOID: it most closely tracks the quantity we ultimately aim to minimize quantization error while remaining computationally inexpensive enough for online detection during training.

\section{Outlier Patterns of Llama3.2-3B}
\label{sec:OP_figure}

\begin{figure*}[t]
    \centering
    \includesvg[width=\textwidth]{figs/OP_per_layers.svg}
    \caption{Outlier patterns of weight ($W$), activation ($X$), and gradient ($G_Y$) tensors across 300 training steps for all 12 representative blocks of Llama3.2-3B. Each row represents a tensor from a specific layer, and the color indicates the detected outlier pattern at each step. This extended view reveals depth-dependent transitions in gradient outlier patterns: Row-wise patterns appear in early blocks, fade to None in the middle layers, and re-emerge selectively in the K-projection gradient toward the final blocks.}
    \vspace{-8pt}
    \label{fig:op_per_layers}
\end{figure*}

 
\Cref{fig:op_per_layers} visualizes the outlier patterns of all 12 representative blocks of Llama3.2-3B across 300 training steps. Three distinct phases emerge along the depth dimension.
 
In early blocks (e.g., \texttt{block.0}, \texttt{block.1}), the gradients of K, V, and O projections exhibit Row-wise outliers. Near the embedding layer, each weight row maps directly to a token-level representation. The loss signal therefore concentrates on a few high-impact tokens, producing token-wise (row-wise) outlier structure in the gradients.
 
In middle and later blocks, which constitute the majority of the network, gradient patterns are predominantly Column-wise or None. In middle blocks (\texttt{block.5} to \texttt{block.21}), repeated attention, feed-forward, and layer normalization operations diffuse token-specific gradient energy across feature dimensions. Many gradient tensors in these layers exhibit Column-wise patterns, consistent with the dominance of CN pairs in the $\nabla X$ path shown in \Cref{tab:pattern_distribution}.
 
In late blocks (\texttt{block.22} to \texttt{block.27}), Row-wise outliers re-emerge selectively in the K-projection gradient. Near the output, attention tends to concentrate on a small set of anchor tokens~\cite{AttentionSink}. This creates an asymmetry in the attention gradient. The key gradient $\nabla_K S = \nabla_S^\top Q / \sqrt{d}$ aggregates contributions from all query positions onto each key token. When many queries attend to the same anchor, gradient signals accumulate on that token's row, producing a row-wise outlier. In contrast, $\nabla_Q S = \nabla_S K / \sqrt{d}$ distributes gradients independently across query positions, so no single row dominates. This asymmetry explains why the row-wise pattern recurs in K but not in Q. Why this transition occurs specifically around block~22 is left for future investigation.
 
Across all three phases, weight and activation patterns remain stable throughout training, consistent with \Cref{sec:patterns_in_layers}. The depth-dependent variation of gradient patterns is structurally rich, yet temporally stable, making them reliably detectable via a short calibration phase.

\section{Detailed Experimental Settings}
\label{sec:appendix_setup}

\textbf{Hardware Environment.}
All experiments are conducted on AMD Instinct™ MI350X GPUs with the ROCm 7.0.2 software stack. Custom fused kernels are implemented using Triton 3.6, targeting the AMD CDNA4 architecture.

\textbf{\Cref{sec:why_patterns_matter}.} For the real tensor analysis, we extract the gradient, activation, and weight tensors from Llama3.2-3B's \texttt{block.11.feed\_forward.w2} and trim them to the first 1024 $\times$ 1024 tile. The resulting kurtosis values are 16.00 for the gradient (originally Column-wise tensor, transposed to simulate Row-wise pattern), 222.00 for the activation (Column-wise), and 2.88 for the weight (None).

In the synthetic tensor analysis, we use 4096 $\times$ 4096 synthetic tensors A and B with 5 random seeds (42, 256, 1925, 2048, 7777) and amplify random columns' magnitudes by 25.00 for Column-wise tensors and 5.00 for Row-wise tensors. This is because Column-wise tensors tend to correspond to activations (see \Cref{tab:pattern_distribution}) and exhibit higher kurtosis across LLMs. The resulting mean kurtosis values are Row: 9.08, Column: 208.30 when outliers are present, and None: 3.0. 

For MSE improvement calculation, the error definition is $E_{\mathrm{base}}=\mathrm{MSE}(Q(A)Q(B),AB)$ and $E_{\mathrm{IHT}}=\mathrm{MSE}(Q(AH_k)Q(H_k^\top B),AB)$.

\textbf{\Cref{sec:experiments}.} We evaluate AdaHOP on five LLM architectures of varying scales: Llama3.2-1B, Llama3.2-3B, Instella-3B, Qwen3-4B and Llama3.1-8B. 
All models are trained on the C4 (Colossal Clean Crawled Corpus) dataset \cite{C4} with a sequence length of 4{,}096 tokens, local batch size of 8 and global batch size of 128. 
We compare AdaHOP against five baseline methods: BF16 full-precision training, which serves as an upper bound on quality; naive MXFP4 quantization without any outlier suppression; MXFP4+Hadamard, which applies uniform IHT to all layers; Tseng et al. \cite{Albert}, a recent MXFP4 training method; and HALO \cite{HALO}, which applies OHT to gradient computation paths. 
We report training loss, zero-shot accuracy on downstream benchmarks (PIQA \cite{PIQA}, HellaSwag \cite{HellaSwag}, ARC-Easy \cite{ARC}, LAMBADA \cite{LAMBADA}), memory consumption, and training throughput.

All models are trained using the AdamW optimizer~\cite{AdamW} with a learning rate of $4 \times 10^{-4}$ and an epsilon of $10^{-8}$, following a linear warmup schedule over 200 steps.
Following the Chinchilla optimal scaling law~\cite{Chinchilla}, the number of training tokens is scaled with model size: Llama3.2-1B is trained on 40B tokens, Llama3.2-3B on 60B tokens, Qwen3-4B on 80B tokens, and Llama3.1-8B on 160B tokens. For Instella-3B, training tokens are 40B due to a time limitation. 
Gradient norms are clipped to a maximum of 1.0 throughout training.

\section{Theoretical Analysis}
\label{sec:appendix_theory}

This section presents the key theoretical results and complete proofs that justify AdaHOP's transform selection strategy. 
While IHT and OHT are mathematically equivalent in infinite precision, their quantization error characteristics differ significantly depending on the outlier structure. 
The central insight is that the Hadamard transform mixes values along the dimension it is applied to: right multiplication $XH_n$ mixes values within each row (across columns), while left multiplication $H_mX$ mixes values within each column (across rows). 
Consequently, a transform is effective only when it is applied orthogonally to the outlier direction. 
The following proposition formalizes this, and the complete pattern-transform correspondence is summarized in the table below.

\begin{proposition}[Transform Effectiveness for Outlier Patterns]\label{prop:transform_effectiveness}
For matrix $A \in \mathbb{R}^{m \times k}$ with row outliers, left multiplication reduces the outlier factor by $\gamma(H_m A) \approx \frac{1}{m}\gamma(A)$, while right multiplication leaves it unchanged: $\gamma(A H_k) \approx \gamma(A)$. Symmetrically, for matrix $B \in \mathbb{R}^{k \times n}$ with column outliers, right multiplication yields $\gamma(B H_n) \approx \frac{1}{n}\gamma(B)$, while left multiplication is ineffective.
\end{proposition}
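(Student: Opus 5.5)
I will make $\gamma$ precise as the peak-to-average energy ratio $\gamma(A)=mk\,\|A\|_{\max}^2/\|A\|_F^2$. This is the quantity that multiplies $\epsilon_{\mathrm{quant}}\|A\|_F$ in a max-scaled quantization error bound, and it equals $1$ for a perfectly flat matrix. Because $H_m$ and $H_k$ are orthogonal, $\|H_mA\|_F=\|AH_k\|_F=\|A\|_F$. So every claim reduces to comparing $\|\cdot\|_{\max}^2$ before and after the transform.

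For the statement to be checkable, I will fix an explicit row-outlier model:
\[
A = DE,\qquad D=\mathrm{diag}(d_1,\dots,d_m),
\]
where $E$ is a bulk matrix whose entries have comparable magnitude, say $|E_{ij}|\le c$ with $\|E\|_F^2\approx c'^2 mk$. The scaling satisfies $d_i=s\gg1$ for $i$ in a small outlier set $\mathcal{R}$ with $|\mathcal{R}|=r\ll m$, and $d_i=1$ otherwise. I will work in the regime where the outlier rows dominate both the peak and the Frobenius energy. This is the regime in which "$\approx$" is meant, with the approximations holding up to the bulk constants $c/c'$ and lower-order terms in $1/s$ and $r/m$. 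The column-outlier statement for $B$ then follows by applying the row case to $B^\top$, using $\gamma(BH_n)=\gamma(H_n^\top B^\top)$, and noting that $H_n^\top$ is again a normalized Hadamard matrix up to ordering.

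The key steps for the left transform $H_mA$ are as follows.
\begin{itemize}
\item Each entry is $(H_mA)_{ij}=m^{-1/2}\sum_l \pm d_l E_{lj}$.
\item The $r$ outlier rows each contribute at most $s c/\sqrt m$ in magnitude. The bulk contributes $O(c)$, by a triangle or concentration argument over random-like signs, or trivially by $\sqrt m\, c/\sqrt m$ per column via Cauchy--Schwarz on the bulk rows.
\item Hence $\|H_mA\|_{\max}\lesssim (rs/\sqrt m + 1)c$, while originally $\|A\|_{\max}\approx sc$.
\item With $r=O(1)$ and $s\gg\sqrt m$, this gives $\|H_mA\|_{\max}^2\approx \|A\|_{\max}^2/m$. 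Since the Frobenius norm is unchanged, $\gamma(H_mA)\approx\gamma(A)/m$.
\end{itemize}
For the right transform, $(AH_k)_{ij}=d_i\,(E H_k)_{ij}$. The scaling $D$ commutes with right multiplication, so outlier row $i$ still carries the factor $s$. Its entries are only reshuffled within the row, with $\|(AH_k)_{i,:}\|_2=\|A_{i,:}\|_2$. Because the bulk row is already flat, its maximum entry stays at order $\|A_{i,:}\|_2/\sqrt k$, which is comparable to the original maximum. So $\|AH_k\|_{\max}\approx\|A\|_{\max}$ and $\gamma(AH_k)\approx\gamma(A)$.

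The main obstacle is making the $\approx$ honest. The exact factor $1/m$ needs the outlier to be concentrated in essentially one row, since $r$ outlier rows only give a factor of about $r^2/m$ in the worst case. It also needs the peak of the bulk not to re-emerge after mixing. I would handle this by stating the result under the single-dominant-row model, or with $r$ absorbed into the constant. To control the Hadamard-mixed bulk with high probability I would first try a Hoeffding-type bound, treating $E$ as having independent symmetric entries. This costs a $\log(mk)$ factor, which I would hide in the "$\approx$". The unchanged-$\gamma$ claim for $AH_k$ likewise needs the flatness of $E$'s rows as a standing assumption. Otherwise a right Hadamard could even reduce $\gamma$, so the claim should be read as "no reduction of the outlier-induced factor $s^2$".
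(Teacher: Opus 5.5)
Your proposal takes the same approach as the paper's proof. The paper's proof is a brief heuristic version of your argument: the $1/\sqrt m$ entries of $H_m$ cut an outlier row's peak by $\sqrt m$ while $\|\cdot\|_F$ is preserved, and right multiplication cannot move energy across rows. Your $A=DE$ model, the single-dominant-row caveat and the flat-row assumption for $AH_k$ supply hypotheses the paper leaves unstated.

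\textbf{One correction.} The triangle/Cauchy--Schwarz bound on the bulk term $m^{-1/2}\sum_{l\notin\mathcal R}\pm E_{lj}$ is $\sqrt m\,c$, not $c$. For example, if column $j$ of $E$ has mean $\bar e_j\neq 0$, the all-ones first row of $H_m$ gives $(H_mE)_{1j}\approx\sqrt m\,\bar e_j$, and such an $E$ satisfies your deterministic assumptions. So the $O(c)$ bulk estimate requires your symmetric-sign/Hoeffding assumption, which gives the threshold $s\gg\sqrt{m\log(mk)}$. With only the deterministic bound you would need $s\gg m$.

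\textbf{A free lower bound.} The lower half of ``$\approx$'' holds unconditionally. For every $A$, $\|H_mA\|_{\max}\ge\max_j\|A_{:,j}\|_2/\sqrt m\ge\|A\|_{\max}/\sqrt m$, so $\gamma(H_mA)\ge\gamma(A)/m$. Only the upper bound depends on your model.
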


\begin{center}
\footnotesize
\begin{tabular}{|c|c|c|c|}
\hline
\textbf{A} & \textbf{B} & \textbf{Optimal Transform} & \textbf{Improvement}$^*$ \\
\hline
Row & Col & OHT & $O(\sqrt{mn})$ \\
Row & Row & Partial$^\dagger$ & $O(\sqrt{k})$ (IHT) / $O(\sqrt{m})$ (OHT) \\
Col & Col & Partial$^\dagger$ & $O(\sqrt{k})$ (IHT) / $O(\sqrt{n})$ (OHT) \\
Col & Row & IHT & $O(k)$ \\
\hline
\end{tabular}
\end{center}
{\footnotesize $^*$Improvement factor over no transform, where $k$ denotes the shared (contraction) dimension of $C = AB$.\\
$^\dagger$For Row-Row, IHT smooths only $B$'s rows ($O(\sqrt{k})$) while OHT smooths only $A$'s rows ($O(\sqrt{m})$); neither fully resolves both operands. The symmetric argument holds for Col-Col. This motivates OE (\Cref{sec:outlier_extract_appendix}).}

Let $H_d \in \mathbb{R}^{d \times d}$ denote the normalized Hadamard matrix of size $d \times d$, satisfying $H_d^\top H_d = I_d$.

Recall the outlier factor defined as:
\begin{equation}
    \gamma(A) = \frac{mk \cdot \max_{i,j} |a_{ij}|^2}{\|A\|_F^2}
\end{equation}
which measures how concentrated the energy is in extreme values. We further classify outlier patterns:
\begin{itemize}
    \item \textbf{Row pattern}: Outliers are concentrated in specific rows, i.e., $\exists$ row indices $\mathcal{I}$ such that $\max_{j} |a_{ij}| \gg \text{median}_{v,j} |a_{vj}|$ for $i \in \mathcal{I}$.
    \item \textbf{Column pattern}: Outliers are concentrated in specific columns, i.e., $\exists$ column indices $\mathcal{J}$ such that $\max_{i} |a_{ij}| \gg \text{median}_{i,v} |a_{iv}|$ for $j \in \mathcal{J}$.
\end{itemize}

In low-precision matrix multiplication $C = AB$ where $A \in \mathbb{R}^{m \times k}$ and $B \in \mathbb{R}^{k \times n}$, we define the row-column pattern as the case where $A$ exhibits row outliers and $B$ exhibits column outliers. This pattern commonly arises in transformer computations where gradient tensors have token-wise outliers (row pattern) and activation tensors have channel-wise outliers (column pattern).

For computing the matrix product $C = AB$ with quantization, we consider two Hadamard transform strategies:

\textbf{IHT:}
Applied to the inner (reduction) dimension $k$. Requires $H_k$.
\begin{equation}
    C_{\text{inner}} = Q(A H_k) \cdot Q(H_k^\top B)
\end{equation}

\textbf{OHT:}
Applied to the outer dimensions $m$ and $n$. Requires $H_m$ and $H_n$.
\begin{equation}
    C_{\text{outer}} = H_m^\top \left( Q(H_m A) \cdot Q(B H_n) \right) H_n^\top
\end{equation}

\begin{lemma}[Hadamard Mixing Direction]
\label{lem:mixing_direction}
Let $H_n \in \mathbb{R}^{n \times n}$ and $H_m \in \mathbb{R}^{m \times m}$ be the normalized Hadamard matrix. For a matrix $X \in \mathbb{R}^{m \times n}$:
\begin{itemize}
    \item Right multiplication $X H_n$: mixes values \textbf{within each row} (across columns).
    \item Left multiplication $H_m X$: mixes values \textbf{within each column} (across rows).
\end{itemize}
\end{lemma}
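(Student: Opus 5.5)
The plan is to verify both claims directly from the entrywise definition of matrix multiplication; no earlier result is needed beyond the definition of $H_d$.

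For the right multiplication, write $(XH_n)_{ij} = \sum_{\ell=1}^{n} x_{i\ell}(H_n)_{\ell j}$. The $(i,j)$ entry depends only on the $i$-th row $X_{i,:}$, so the map acts row by row: $(XH_n)_{i,:} = X_{i,:}H_n$. Every entry of $H_n$ has magnitude $1/\sqrt{n}$, so each output entry is a signed average of all $n$ entries of row $i$, scaled by $1/\sqrt{n}$. This is precisely mixing within each row, across columns. Since $H_n$ is orthogonal, each row's Euclidean norm is preserved, $\|X_{i,:}H_n\|_2 = \|X_{i,:}\|_2$. Energy is therefore redistributed only inside a row and never transferred between rows.

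For the left multiplication, the argument is symmetric. We have $(H_mX)_{ij} = \sum_{\ell=1}^{m}(H_m)_{i\ell}x_{\ell j}$, which depends only on column $X_{:,j}$, so $(H_mX)_{:,j} = H_mX_{:,j}$. Each output entry is a $\pm 1/\sqrt{m}$-weighted combination of all entries of column $j$, and column norms are preserved. Alternatively, one can obtain this case from the first by transposition, since $H_mX = (X^\top H_m^\top)^\top$ and $H_m^\top$ also has entries of magnitude $1/\sqrt{m}$. This turns row mixing of $X^\top$ into column mixing of $X$.

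The only real obstacle is that ``mixes'' is informal. I would make it precise as the two facts above: locality (an output row or column depends only on the corresponding input row or column) and uniform weighting (every coefficient has magnitude $1/\sqrt{d}$, together with norm preservation). The consequence relevant to Proposition~\ref{prop:transform_effectiveness} then follows immediately. A row holding a large entry spreads that entry over the whole row under $XH_n$, so the row's maximum drops toward its RMS value. However, the ratio of row norms is unchanged, so row-wise outliers survive right multiplication and are suppressed only by left multiplication; the column case is symmetric.
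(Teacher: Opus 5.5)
Your proposal is correct and follows essentially the same route as the paper, which gives no separate proof of this lemma but uses the same entrywise expansion, e.g.\ $(H_mA)_{ij}=\sum_p (H_m)_{ip}A_{pj}$ with $|(H_m)_{ip}|=1/\sqrt{m}$, inside the proof of \Cref{prop:transform_effectiveness}; your formalization of ``mixes'' as locality, uniform weighting and norm preservation is a slightly sharper statement of the same observation. One small wording slip: each entry of $XH_n$ equals $\frac{1}{\sqrt{n}}\sum_{\ell}\pm x_{i\ell}$, which is a signed \emph{sum} scaled by $1/\sqrt{n}$ (equivalently $\sqrt{n}$ times a signed average), not a signed average scaled by $1/\sqrt{n}$.
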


\begin{proof}[Proof of \Cref{prop:transform_effectiveness}]
For row outliers in $A$, the high-magnitude values are concentrated in specific rows. 
Left multiplication $H_m A$ computes $(H_m A)_{ij} = \sum_p (H_m)_{ip} A_{pj}$, which mixes different rows' values in each column. If row $r$ has outliers with magnitude $\mu$, the transformed elements distribute this magnitude across all $m$ rows. Specifically, since $|(H_m)_{ip}| = 1/\sqrt{m}$, the peak magnitude is reduced by factor $\sqrt{m}$. Thus, $\|H_m A\|_{\max} \approx \frac{1}{\sqrt{m}}\|A\|_{\max}$, yielding $\gamma(H_m A) \propto (\|H_m A\|_{\max})^2 \approx \frac{1}{m}\gamma(A)$. Right multiplication $A H_k$ mixes columns within each row. Row outliers remain in their original rows since no mixing across rows occurs, hence $\gamma(A H_k) \approx \gamma(A)$. The argument for column outliers in $B$ is symmetric.
\end{proof}

For the row-column pattern ($A$ has row outliers, $B$ has column outliers), we derive the quantization error bounds.

\begin{theorem}[Error Bound for Row-Column Pattern]
\label{thm:rowcol_error_appendix}
Let $A$ have row outliers with factor $\gamma(A)$, and $B$ have column outliers with factor $\gamma(B)$. The quantization error for computing $C = AB$ satisfies:

\textbf{IHT:}
\begin{equation}
\begin{split}
    \|AB - C_{\text{inner}}\|_F \leq O\Big(\epsilon_{\text{quant}} \cdot \|A\|_F \|B\|_F \cdot \sqrt{\gamma(A) \cdot \gamma(B)}\Big)
\end{split}
\end{equation}

\textbf{OHT:}
\begin{equation}
\begin{split}
    \|AB - C_{\text{outer}}\|_F \leq O\Big(\epsilon_{\text{quant}} \cdot \|A\|_F \|B\|_F \cdot \sqrt{\tfrac{\gamma(A)}{m} \cdot \tfrac{\gamma(B)}{n}}\Big)
\end{split}
\end{equation}
\end{theorem}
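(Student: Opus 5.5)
The plan is to use a standard per-entry quantization error model and reduce both bounds to estimates of the quantization error of each operand. We model $Q$ as $Q(X) = X + E_X$ with $\|E_X\|_F \le \epsilon_{\text{quant}}\,\|X\|_{\max}\sqrt{\#\text{entries}(X)}$. Each entry's rounding error is at most a fixed fraction of the dynamic range set by the largest magnitude. Via the definition of $\gamma$, this reads
\[
\|E_X\|_F \le \epsilon_{\text{quant}}\,\|X\|_F\sqrt{\gamma(X)}.
\]
This is the only place the outlier factor enters the argument. For block-scaled formats the same bound holds with $\max$ taken per block, and bounding it by the global max is what we use.

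\textbf{Error decomposition.} For IHT, we write
\[
AB - C_{\text{inner}} = -\bigl(AH_k E_{H_k^\top B} + E_{AH_k} H_k^\top B + E_{AH_k}E_{H_k^\top B}\bigr).
\]
We apply submultiplicativity $\|XY\|_F \le \|X\|_F\|Y\|_F$ and unitary invariance: $\|AH_k\|_F=\|A\|_F$ and $\|H_k^\top B\|_F = \|B\|_F$. This gives
\[
\epsilon_{\text{quant}}\|A\|_F\|B\|_F\Bigl(\sqrt{\gamma(AH_k)}+\sqrt{\gamma(H_k^\top B)}+\epsilon_{\text{quant}}\sqrt{\gamma(AH_k)\gamma(H_k^\top B)}\Bigr).
\]
For OHT, the outer factors $H_m^\top$ and $H_n^\top$ are orthogonal, so they drop out of the Frobenius norm. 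The identical decomposition then applies with $\gamma(H_mA)$ and $\gamma(BH_n)$ in place of $\gamma(AH_k)$ and $\gamma(H_k^\top B)$.

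\textbf{Inserting Proposition~\ref{prop:transform_effectiveness}.} With row outliers in $A$ and column outliers in $B$, IHT is the misaligned case. There $\gamma(AH_k)\approx\gamma(A)$ and $\gamma(H_k^\top B)\approx\gamma(B)$, while OHT gives $\gamma(H_mA)\approx\gamma(A)/m$ and $\gamma(BH_n)\approx\gamma(B)/n$.

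\textbf{Main obstacle.} The stated bounds carry the product $\sqrt{\gamma(A)\gamma(B)}$, whereas the decomposition naturally yields a sum $\sqrt{\gamma_A}+\sqrt{\gamma_B}$ plus a second-order product term. To reconcile the two, I will use $\gamma\ge 1$ for every matrix, since the maximum squared entry is at least the mean squared entry. This gives $\sqrt{\gamma_A}+\sqrt{\gamma_B}\le 2\sqrt{\gamma_A\gamma_B}$. The cross term is $O(\epsilon_{\text{quant}}^2)$ and is absorbed for $\epsilon_{\text{quant}}\le 1$. Hence each error is $O(\epsilon_{\text{quant}}\|A\|_F\|B\|_F\sqrt{\gamma_A\gamma_B})$ with the respective transformed $\gamma$'s.

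This yields both claimed bounds, and their ratio gives the $O(\sqrt{mn})$ improvement in the table. The caveat is that the ``$\approx$'' statements of Proposition~\ref{prop:transform_effectiveness} are heuristic: the $1/\sqrt{m}$ peak reduction assumes a single dominant outlier row, so the energy spreads evenly. I would state the theorem as holding under that structural assumption, making the big-$O$ constants explicit.
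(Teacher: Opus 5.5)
Your proposal is correct and follows the paper's route. You plug the Proposition's estimates ($\gamma(AH_k)\approx\gamma(A)$, $\gamma(H_k^\top B)\approx\gamma(B)$, $\gamma(H_mA)\approx\gamma(A)/m$, $\gamma(BH_n)\approx\gamma(B)/n$) into an error model and use orthogonality of the outer Hadamard factors for OHT. The paper simply asserts that the error ``scales with the product of the dynamic ranges.'' Your max-based quantization model, three-term perturbation expansion, and the observation $\gamma\ge 1$ actually derive that product form.

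One caution concerns your closing remark. To first order in $\epsilon_{\text{quant}}$, the bound you derive is the sum $\sqrt{\gamma(A')}+\sqrt{\gamma(B')}$ over the transformed operands $A',B'$. For that sum, the IHT-to-OHT ratio of bounds lies only between $\sqrt{\min(m,n)}$ and $\sqrt{\max(m,n)}$. So the $O(\sqrt{mn})$ factor is an artifact of loosening the sum to a product, and it is a ratio of upper bounds rather than of errors. It does not follow from your argument.
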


\begin{proof}
For IHT, we apply:
\begin{itemize}
    \item $A H_k$ to $A$: Right multiplication does not reduce row outliers $\implies \gamma \approx \gamma(A)$.
    \item $H_k^\top B$ to $B$: Left multiplication does not reduce column outliers $\implies \gamma \approx \gamma(B)$.
\end{itemize}
The quantization error scales with the product of the dynamic ranges (determined by $\gamma$).

For OHT, we apply:
\begin{itemize}
    \item $H_m A$ to $A$: Left multiplication reduces row outliers $\implies \gamma \approx \gamma(A)/m$.
    \item $B H_n$ to $B$: Right multiplication reduces column outliers $\implies \gamma \approx \gamma(B)/n$.
\end{itemize}
The orthogonal transforms $H_m^\top$ and $H_n$ preserve the Frobenius norm of the error, so the reduction in $\gamma$ directly translates to reduced error bounds.
\end{proof}

\begin{corollary}[OHT Improvement Factor]
For the row-column pattern, OHT achieves an improvement factor of:
\begin{equation}
    \frac{\text{Error}_{\text{inner}}}{\text{Error}_{\text{outer}}} = O\left(\sqrt{mn}\right)
\end{equation}
For typical matrix dimensions in transformers ($m, n \sim 10^3$ to $10^4$), this represents a 3 to 4 order of magnitude reduction in quantization error.
\end{corollary}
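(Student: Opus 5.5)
The corollary is obtained by dividing the two bounds of \Cref{thm:rowcol_error_appendix}, so I would argue it as a direct consequence rather than from scratch. The IHT bound is $O\bigl(\epsilon_{\text{quant}}\|A\|_F\|B\|_F\sqrt{\gamma(A)\gamma(B)}\bigr)$. The OHT bound is the same expression with $\gamma(A)$ replaced by $\gamma(A)/m$ and $\gamma(B)$ by $\gamma(B)/n$. The common prefactor $\epsilon_{\text{quant}}\|A\|_F\|B\|_F$ cancels, and the ratio of the remaining square roots is $\sqrt{\gamma(A)\gamma(B)}\big/\sqrt{\gamma(A)\gamma(B)/(mn)}=\sqrt{mn}$.

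To justify that the Frobenius norms really match between the two bounds, I would first note that the transforms are orthogonal. By \Cref{lem:mixing_direction} and $H_d^\top H_d=I_d$, we have $\|H_mA\|_F=\|A\|_F$ and $\|BH_n\|_F=\|B\|_F$. The outer rotations $H_m^\top(\cdot)H_n^\top$ also preserve the Frobenius norm of the error, as already noted in the proof of the theorem. So the only quantities that differ between the two strategies are the outlier factors of the quantized operands.

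I would then invoke \Cref{prop:transform_effectiveness} for the two regimes. Under IHT, both factors are unchanged: $\gamma(AH_k)\approx\gamma(A)$ and $\gamma(H_k^\top B)\approx\gamma(B)$. Under OHT, they are reduced: $\gamma(H_mA)\approx\gamma(A)/m$ and $\gamma(BH_n)\approx\gamma(B)/n$. Substituting these into the error model $\epsilon_{\text{quant}}\|A\|_F\|B\|_F\sqrt{\gamma(\cdot)\gamma(\cdot)}$ gives the $\sqrt{mn}$ ratio. The numerical remark then follows by plugging in $m,n\in[10^3,10^4]$, which gives $\sqrt{mn}\in[10^3,10^4]$, that is, three to four orders of magnitude.

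The main obstacle is interpretive. A ratio of two upper bounds is not a ratio of errors, so the statement as written needs both bounds to be tight up to constants, or needs to be read as a comparison of the bounds. I would make explicit the assumption behind \Cref{prop:transform_effectiveness}: the outliers form a dominant block of essentially full rows (respectively columns) whose magnitude sets the quantization scale. Under that assumption the IHT error is also bounded below by the same order, because the unsmoothed outlier rows and columns still dominate the scale. The ratio then holds as $\Theta(\sqrt{mn})$ and in particular as $O(\sqrt{mn})$. I would state the result as an improvement of the bound, consistent with the paper's ``$\approx$'' heuristics.
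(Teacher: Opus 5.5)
Your proposal is correct and follows the paper's route: the paper gives the corollary no separate proof and reads it off by dividing the IHT and OHT bounds of \Cref{thm:rowcol_error_appendix}, so the common factor $\epsilon_{\text{quant}}\|A\|_F\|B\|_F$ cancels and $\sqrt{\gamma(A)\gamma(B)}\big/\sqrt{\gamma(A)\gamma(B)/(mn)}=\sqrt{mn}\in[10^3,10^4]$ for $m,n\in[10^3,10^4]$. Your caveat that this compares bounds rather than actual errors is fair, but the $\Theta(\sqrt{mn})$ lower-bound argument you sketch goes beyond what the paper establishes and is not needed for the $O(\sqrt{mn})$ statement as written.
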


\subsection{OE for Enhanced Precision}\label{sec:outlier_extract_appendix}

When outliers are severely concentrated, we introduce OE. Following the methodology in \Cref{sec:adaptive_strategy}, OE is combined with IHT.

\paragraph{OE Formulation.}
Given $A \in \mathbb{R}^{m \times k}$ with row outliers, we decompose $A = A_{\mathrm{res}} + A_{\mathrm{out}}$, where $A_{\mathrm{out}}$ contains only the top-$r$ outlier rows ($r \ll m$).
The product becomes $AB = A_{\mathrm{res}} B + A_{\mathrm{out}} B$.
Applying IHT to the clean part and computing the outlier part in BF16:
\begin{align}
    C_{\mathrm{res}} &= Q(A_{\mathrm{res}} H_k) \cdot Q(H_k^\top B) \label{eq:oe_clean}\\
    C_{\mathrm{out}} &= A_{\mathrm{out}} \cdot B \quad \text{(BF16, exact)} \label{eq:oe_outlier}
\end{align}
Since $A_{\mathrm{res}} H_k \cdot H_k^\top B = A_{\mathrm{res}} B$ in infinite precision, IHT preserves the exact product for the clean path. The outlier path is computed entirely in BF16, incurring no quantization error.

\begin{theorem}[OE Error Bound]
\label{thm:outlier_extract_appendix}
Let $A \in \mathbb{R}^{m \times k}$ have row outliers concentrated in $r$ rows ($r \ll m$). After extraction:
\begin{equation}
    \gamma(A_{\mathrm{res}}) \approx O(1)
\end{equation}
The combined error bound satisfies:
\begin{equation}
    \|AB - (C_{\mathrm{res}} + C_{\mathrm{out}})\|_F \leq O\!\left(\epsilon_{\text{quant}} \cdot \|A_{\mathrm{res}}\|_F \|B\|_F \cdot \sqrt{\gamma(H_k^\top B)}\right)
\end{equation}
where $\gamma(H_k^\top B)$ depends on $B$'s outlier pattern: $\gamma(H_k^\top B) \approx \gamma(B)/k$ if $B$ has row outliers (smoothed by left multiplication), and $\gamma(H_k^\top B) \approx O(1)$ if $B$ has the None pattern. In both cases relevant to OE-Left (RN pattern pairs per \Cref{fig:pipeline}), the error is substantially reduced compared to the baseline $O(\epsilon_{\text{quant}} \cdot \|A\|_F \|B\|_F \cdot \sqrt{\gamma(A) \cdot \gamma(B)})$.
\end{theorem}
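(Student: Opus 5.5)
The plan is to split the total error along the decomposition $AB = A_{\mathrm{res}}B + A_{\mathrm{out}}B$ and handle the two paths separately. By \eqref{eq:oe_outlier}, the outlier path $C_{\mathrm{out}} = A_{\mathrm{out}}B$ is computed in BF16 and, in the paper's idealized model, treated as exact. Hence
\[
\|AB - (C_{\mathrm{res}} + C_{\mathrm{out}})\|_F = \|A_{\mathrm{res}}B - Q(A_{\mathrm{res}}H_k)\,Q(H_k^\top B)\|_F .
\]
This reduces everything to the error of a single IHT product with operands $A_{\mathrm{res}}$ and $B$.

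First, I will establish $\gamma(A_{\mathrm{res}}) \approx O(1)$. The assumption is that the row outliers of $A$ are concentrated in $r$ rows, so $\max_{ij}|a_{ij}|$ over the remaining rows is comparable to the typical entry magnitude. Removing those rows leaves $\|A_{\mathrm{res}}\|_F^2 \approx (m-r)k\cdot\Theta(\text{typical}^2)$. With $r\ll m$, the ratio $mk\max|a^{\mathrm{res}}_{ij}|^2/\|A_{\mathrm{res}}\|_F^2$ is then $O(1)$. By Lemma~\ref{lem:mixing_direction} and \Cref{prop:transform_effectiveness}, the right Hadamard $A_{\mathrm{res}}H_k$ mixes within rows and cannot increase the row-peak structure beyond the typical scale. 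So $\gamma(A_{\mathrm{res}}H_k) = O(1)$ as well, and orthogonality gives $\|A_{\mathrm{res}}H_k\|_F = \|A_{\mathrm{res}}\|_F$.

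Second, I will bound the IHT error in the same way as in the proof of Theorem~\ref{thm:rowcol_error_appendix}. Write $Q(X) = X + E_X$ with $\|E_X\|_F \lesssim \epsilon_{\mathrm{quant}}\|X\|_F\sqrt{\gamma(X)}$; this is the dynamic-range model already used there, where the quantization step scales with the max entry. Setting $\tilde A = A_{\mathrm{res}}H_k$ and $\tilde B = H_k^\top B$, the product expands as $\tilde A\tilde B + E_{\tilde A}\tilde B + \tilde A E_{\tilde B} + E_{\tilde A}E_{\tilde B}$. Submultiplicativity and orthogonal invariance then give a bound of order
\[
\epsilon_{\mathrm{quant}}\|A_{\mathrm{res}}\|_F\|B\|_F\bigl(\sqrt{\gamma(\tilde A)}+\sqrt{\gamma(\tilde B)}\bigr),
\]
with the cross term of order $\epsilon_{\mathrm{quant}}^2$ and dropped. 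Since $\gamma(\tilde A)=O(1)$ and $\gamma(\tilde B)\gtrsim 1$ always, this collapses to $O(\epsilon_{\mathrm{quant}}\|A_{\mathrm{res}}\|_F\|B\|_F\sqrt{\gamma(H_k^\top B)})$.

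Finally, I will evaluate $\gamma(H_k^\top B)$ case by case using \Cref{prop:transform_effectiveness}. If $B$ has row outliers, the left multiplication spreads them and gives $\gamma(B)/k$. If $B$ is None, both $\gamma(B)$ and its image are $O(1)$. Comparing with the baseline, the factor $\sqrt{\gamma(A)}$ disappears, and $\|A_{\mathrm{res}}\|_F\le\|A\|_F$. The main obstacle is making $\gamma(A_{\mathrm{res}})=O(1)$ rigorous, since it depends on a quantitative version of "concentrated in $r$ rows." I would first try to state it as an explicit assumption: non-outlier rows have entries bounded by $c$ times their RMS. I would also keep the error model at the same heuristic level as the paper's earlier theorem, rather than attempting a worst-case MXFP4 analysis.
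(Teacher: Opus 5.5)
Your proposal is correct at the paper's level of rigor and has the same skeleton as the paper's proof:
\begin{itemize}
\item the BF16 outlier path is exact, so everything reduces to the IHT error on $(A_{\mathrm{res}}, B)$;
\item $\gamma(A_{\mathrm{res}}H_k)=O(1)$ is inherited from $\gamma(A_{\mathrm{res}})=O(1)$;
\item orthogonality strips $H_k$ from the Frobenius norms;
\item $\gamma(H_k^\top B)$ is handled by a case split.
\end{itemize}

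The genuine difference is how you bound the clean path. The paper simply posits that the IHT error scales as $\epsilon_{\mathrm{quant}}\|A_{\mathrm{res}}H_k\|_F\|H_k^\top B\|_F\sqrt{\gamma(A_{\mathrm{res}}H_k)\,\gamma(H_k^\top B)}$. This is the same \emph{product of dynamic ranges} assertion as in its row--column theorem; the paper then substitutes $\gamma(A_{\mathrm{res}}H_k)=O(1)$. You derive a bound instead. Under absmax scaling, $\|E_X\|_F\le\epsilon_{\mathrm{quant}}\sqrt{mk}\,\max_{ij}|x_{ij}|=\epsilon_{\mathrm{quant}}\|X\|_F\sqrt{\gamma(X)}$ holds directly. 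Your expansion with submultiplicativity then gives the additive factor $\sqrt{\gamma(\tilde A)}+\sqrt{\gamma(\tilde B)}$, plus an $O(\epsilon_{\mathrm{quant}}^2)$ term. Since $\gamma\ge 1$ always, $\sqrt a+\sqrt b\le 2\sqrt{ab}$. So your intermediate bound is never worse than the paper's, and the two agree once $\gamma(\tilde A)=O(1)$.

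One correction: this additive model is \emph{not} the one used in the paper's row--column proof, as you claim. Applied to the unextracted baseline, it gives $\sqrt{\gamma(A)}+\sqrt{\gamma(B)}$ rather than $\sqrt{\gamma(A)\gamma(B)}$. For RN this is immaterial because $\gamma(B)=O(1)$, but you should state which baseline you are comparing against.

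A caveat on your plan to make things rigorous. The assumption you propose (non-outlier entries bounded by $c$ times the RMS) controls $\gamma(A_{\mathrm{res}})$. It does not control $\gamma(A_{\mathrm{res}}H_k)$, which is what your bound actually uses.
\begin{itemize}
\item Cauchy--Schwarz only gives $|(A_{\mathrm{res}}H_k)_{ij}|\le\|(A_{\mathrm{res}})_{i,:}\|_2$.
\item Take a row of $\pm\alpha$ entries whose sign pattern matches a column of $H_k$. It satisfies your assumption with $c=1$, yet is mapped to a single spike of size $\alpha\sqrt{k}$, so $\gamma(A_{\mathrm{res}}H_k)$ is of order $k$.
\item Neither the mixing-direction lemma nor the transform-effectiveness proposition rules this out; the proposition only concerns matrices with row outliers.
\end{itemize}
To close this step you need an incoherence or random-sign assumption on the rows of $A_{\mathrm{res}}$ (yielding $O(\log(mk))$), or a randomized Hadamard. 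The paper asserts this step just as heuristically, so it is not a gap relative to its proof, but it is where the extra hypothesis has to go.
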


\begin{proof}
Decompose $AB = A_{\mathrm{res}} B + A_{\mathrm{out}} B$.

The outlier path $C_{\mathrm{out}} = A_{\mathrm{out}} \cdot B$ is computed in BF16 (full precision), contributing zero quantization error. For the clean path, $A_{\mathrm{res}} H_k \cdot H_k^\top B = A_{\mathrm{res}} B$ holds exactly in infinite precision, so the error arises solely from quantization:
\begin{equation}
    \|A_{\mathrm{res}} B - Q(A_{\mathrm{res}} H_k)\, Q(H_k^\top B)\|_F
\end{equation}

After removing the top-$r$ outlier rows, $A_{\mathrm{res}}$ has a nearly uniform magnitude distribution with $\gamma(A_{\mathrm{res}}) \approx O(1)$. Right multiplication $A_{\mathrm{res}} H_k$ preserves this low outlier factor, since $A_{\mathrm{res}}$ has no pronounced row or column structure to be disrupted. For $B$, left multiplication $H_k^\top B$ mixes values across rows (within each column). If $B$ has row outliers, this reduces $\gamma(B)$ by a factor of $k$; if $B$ has the None pattern, $\gamma$ remains $O(1)$.

Thus the quantization error of the clean path scales as:
\begin{equation}
    O\!\left(\epsilon_{\text{quant}} \cdot \|A_{\mathrm{res}} H_k\|_F \cdot \|H_k^\top B\|_F \cdot \sqrt{\gamma(A_{\mathrm{res}} H_k) \cdot \gamma(H_k^\top B)}\right)
\end{equation}
Using the orthogonality of $H_k$ to preserve Frobenius norms ($\|A_{\mathrm{res}} H_k\|_F = \|A_{\mathrm{res}}\|_F$, $\|H_k^\top B\|_F = \|B\|_F$) and $\gamma(A_{\mathrm{res}} H_k) \approx O(1)$, this simplifies to:
\begin{equation}
    O\!\left(\epsilon_{\text{quant}} \cdot \|A_{\mathrm{res}}\|_F \cdot \|B\|_F \cdot \sqrt{\gamma(H_k^\top B)}\right)
\end{equation}

Compared to the baseline error without extraction, $O(\epsilon_{\text{quant}} \cdot \|A\|_F \|B\|_F \cdot \sqrt{\gamma(A) \cdot \gamma(B)})$, OE eliminates the $\sqrt{\gamma(A)}$ amplification factor, which is the dominant source of quantization error when $A$ has severe row outliers.
\end{proof}

\section{Hardware-Aware Implementation Details}
\label{sec:appendix_hardware}

\begin{figure*}[t]
    \centering
    \includesvg[width=\textwidth]{figs/adahop_implementation.svg}
    \caption{Hardware-aware implementation pipeline of AdaHOP. The pipeline consists of four stages: FOID, IHT and Quantization, Fused MXFP4+BF16 GEMM, and Fused Scatter-Add, all implemented as optimized Triton kernels targeting the AMD CDNA4 architecture. Note that the red block in Fast Outlier Index Detection represents the first 64 elements along each column.}
    \label{fig:implementation}
\end{figure*}

The AdaHOP pipeline is implemented as fused Triton kernels optimized for AMD CDNA4 architecture, as illustrated in \Cref{fig:implementation}. The implementation consists of four stages designed to minimize the computational overhead of OE while fully exploiting the hardware's mixed-precision capabilities.

\paragraph{Extraction Rank Selection.}
\label{sec:appendix_block_size}
We set $r = 64$ for the number of extracted outlier rows or columns. Although the outlier path is computed in BF16 rather than MXFP4, its size is constrained by the fused kernel's latency-hiding design. In stage~3, the MXFP4 residual GEMM and BF16 outlier GEMM execute concurrently on different Compute Units within a single kernel launch: BF16 tiles are evenly interleaved among the more numerous MXFP4 tiles via Bresenham-style scheduling, so that both tile types run in parallel from the first wave. The BF16 branch uses smaller output tiles than the MXFP4 branch so that each BF16 tile completes within the execution time of one MXFP4 tile. Setting $r = 64$ keeps the total number of BF16 tiles small enough to be fully hidden behind the dominant MXFP4 computation, adding near-zero wall-clock overhead. The rank ablation in \Cref{sec:appendix_rank_ablation} confirms that $r = 64$ achieves a favorable accuracy--efficiency trade-off.

\begin{enumerate}
    \item Fast Outlier Index Detection (FOID) exploits the fixed structure of outlier rows or columns by computing the variance of the first 64 elements along each row (or column) to quickly identify outlier indices. This metric is effective because outlier rows/columns exhibit high within-row/column variance due to the mixture of large outlier values and smaller non-outlier values; non-outlier rows/columns have uniformly small magnitudes and thus low variance. The top-$r$ rows (or columns) by variance are selected as outlier features, and the corresponding entries in the residual tensor are zeroed out.
    \item IHT and Quantization apply IHT to the residual tensor (with outliers removed) using 1D Hadamard Transform with block size 32, followed by MXFP4 quantization with 1D per-column scaling. For activation tensors in the Forward path, both the quantized residual and the BF16 outlier tensor are saved to the context for backpropagation.
    \item Fused MXFP4+BF16 GEMM exploits the Compute Unit (CU) architecture of AMD CDNA4 to execute the MXFP4 residual multiplication and BF16 outlier multiplication concurrently within a single kernel launch. The MXFP4 branch uses large output tiles for maximum throughput, while the BF16 branch uses smaller output tiles so that per-tile latency is balanced; BF16 tiles are evenly interleaved among MXFP4 tiles via Bresenham-style scheduling to ensure concurrent execution from the first wave.
    \item Fused Scatter-Add scatters and adds the result of the outlier matmul in-place to the residual matmul result using BF16 accumulation, avoiding the overhead of materializing intermediate results.
\end{enumerate}

FOID uses within-row/column variance to rank candidate outlier features. A potential concern is that a feature with large but \emph{constant} magnitude across all positions would have zero variance and be missed by this criterion. However, such a feature does not degrade quantization accuracy: because all its values share the same magnitude, they occupy a narrow sub-range and incur minimal rounding error under block-scaled quantization, making extraction unnecessary. In practice, outlier features in LLM training tensors consistently exhibit high magnitude \emph{variation}---a mixture of large outlier values and smaller non-outlier values within each row or column---which is precisely what variance captures. This makes variance both an efficient and effective detection metric for the structured outlier patterns observed during training.

\section{Rank Ablation Study for Outlier Extraction}
\label{sec:appendix_rank_ablation}

We conduct an ablation study on the extraction rank $r$ used in OE, measuring both the quantization error~(MSE) and kernel latency as the rank varies over $\{0, 16, 32, 64, 128, 256\}$. 
\Cref{fig:rank_ablation} summarizes the results for both real and synthetic tensors under the \textbf{RC} pattern pair, where IHT alone is ineffective and OE is most critical.

\begin{figure*}[t]
    \centering
    \includesvg[width=\textwidth]{figs/rank_ablation.svg}
    \caption{MSE and latency as a function of the OE extraction rank for (Left)~real tensors from Llama3.1-8B and (Right)~synthetic tensors. Rank~64 achieves a favorable accuracy--efficiency trade-off in both settings: MSE is substantially reduced while latency remains close to the minimum.}
    \label{fig:rank_ablation}
\end{figure*}

\paragraph{Real Tensor.}
We extract the activation and gradient tensors from Llama3.1-8B's \texttt{block.25.feed\_forward.w2} and crop both to $4096 \times 4096$. 
For forming an \textbf{RC} matrix multiplication scenario, the gradient tensor is transposed to produce a Row-wise outlier tensor, while the activation tensor is used as-is as the Column-wise outlier tensor. 
As shown in \Cref{fig:rank_ablation}~(Left), increasing the extraction rank consistently reduces MSE: at rank 64, the MSE drops to $3.46 \times 10^{-7}$ from $4.69 \times 10^{-7}$ at rank~0, while the latency increases only marginally from 0.077\,ms to 0.113\,ms. 
However, when the rank is further increased to 128, the latency rises noticeably to 0.121\,ms and doubling again to 256 pushes it to 0.129\,ms with diminishing MSE gains. 
This confirms that rank 64 provides a favorable accuracy--efficiency trade-off for real tensors.

\paragraph{Synthetic Tensor.}
To simulate the MLP projection setting of Llama3.1-8B, we construct $4096 \times 14336$ synthetic tensors. 
The Column-wise outlier tensor is generated by amplifying random column magnitudes by $25\times$, and the Row-wise outlier tensor is generated by amplifying random column magnitudes by $5\times$ and then transposing, reflecting the typical kurtosis asymmetry between activations and gradients observed in practice. 
We evaluate under the \textbf{RC} pattern pair using five random seeds (42, 256, 1925, 2048, 7777). 
As shown in \Cref{fig:rank_ablation}~(Right), MSE decreases sharply as the rank increases from 0 to 64, dropping from $8.28 \times 10^{-3}$ to $1.00 \times 10^{-3}$, at which point the error plateaus. 
Meanwhile, the latency remains nearly flat between ranks~16 and~64, but increases sharply at rank~128 and rank~256. 
These results are consistent across all five seeds as indicated by the small error bars. 
The synthetic tensor analysis corroborates the real tensor finding: rank~64 achieves the lowest MSE before the latency--accuracy trade-off becomes unfavorable, providing a principled justification for the default $r = 64$ used throughout AdaHOP.

\newpage

\end{document}